\newcommand{\Tr}{{\rm Tr\,}}
\newcommand{\conv}{{\rm conv\,}}
 \newtheorem{thm}{Theorem}[section]
 \theoremstyle{definition}
 \theoremstyle{remark}
 \newtheorem{rem}[thm]{Remark}
 \numberwithin{equation}{section}
\begin{document}

%
%
%
%
%
%
%
%
%

\title[Asymptotic sequential Rademacher complexity]
 {Asymptotic sequential Rademacher complexity\\ of a finite function class}

\author[D.B. Rokhlin]{Dmitry B. Rokhlin}

\address{%
Institute of Mathematics, Mechanics and Computer Sciences\\
              Southern Federal University\\
Mil'chakova str. 8a\\
344090, Rostov-on-Don\\
Russia
}

\email{rokhlin@math.rsu.ru}

\subjclass[2010]{68Q32, 60F05}

\keywords{Sequential Rademacher complexity, Central limit theorem, $G$-heat equation, Viscosity solutions}


\begin{abstract}
For a finite function class we describe the large sample limit of the sequential Rademacher complexity  in terms of the viscosity solution of a $G$-heat equation. In the language of Peng's sublinear expectation theory, the same quantity equals to the expected value of the largest order statistics of a multidimensional $G$-normal random variable. We illustrate this result by deriving upper and lower bounds for the asymptotic sequential Rademacher complexity.
\end{abstract}

\maketitle
\section{Preliminaries} \label{sec:1}
The notion of sequential Rademacher complexity was introduced in \cite{RakSriTew10} (see also \cite{RakSriTew15,RakSriTew15b}).  Let $(\varepsilon_i)_{i=1}^n$ be independent Rademacher random variables: $\mathsf P(\varepsilon_i=1)=\mathsf P(\varepsilon_i=-1)=1/2$.
Consider a set $\mathcal Z$, endowed with a $\sigma$-algebra $\mathscr G$, and a collection $\mathcal F$ of Borel measurable functions $f:\mathcal Z\mapsto\mathbb R$. For any sequence of functions $z_n:\{-1,1\}^{n-1}\mapsto\mathcal Z$, $n\ge 1$, where $z_1$ is simply an element of $\mathcal Z$, put
$$\mathfrak R_n(\mathcal F,z_1^n)=\frac{1}{\sqrt n}\mathsf E\sup_{f\in\mathcal F}\sum_{t=1}^n\varepsilon_t f(z_t(\varepsilon_1^{t-1})).$$
By $a_1^n$ we denote a sequence $(a_1,\dots,a_n)$. The \emph{sequential Rademacher complexity} of the function class $\mathcal F$ is defined by
\begin{equation} \label{1.1}
\mathfrak R_n(\mathcal F)=\sup_{z_1^n}\mathfrak R_n(\mathcal F,z_1^n).
\end{equation}

The incentives to study this quantity come from the online learning theory, where on every round $t$ a learner picks an element $q_t$ from the set $\mathcal Q$ of all probability distributions  defined on the Borel $\sigma$-algebra of the metric space $\mathcal F$, and an adversary picks an element $z_t\in\mathcal Z$. The value $\int_\mathcal F f(z_t)\,q_t(df)$ determines the loss of the learner. The normalized cumulative regret over $n$ rounds is defined by
$$ \mathscr R_n(q_1^n,z_1^n)=\frac{1}{\sqrt n}\left(\sum_{t=1}^n \int_\mathcal F f(z_t)\,q_t(df)-\inf_{f\in\mathcal F}\sum_{t=1}^n f(z_t)\right).$$
This quantity compares the regret of the randomized strategy $q_1^n$ with the regret of a best deterministic decision, taken in hindsight. Choosing their strategies, the learner and the adversary can use the information on all previous moves. Without going into the details, we only define the value of the repeated two-player game:
$$\mathcal V_n(\mathcal F)=\inf_{q_1\in\mathcal Q}\sup_{z_1\in\mathcal Z}\dots\inf_{q_n\in\mathcal Q}\sup_{z_n\in\mathcal Z}\mathscr R_n(q_1^n,z_1^n).$$

Typically, the sum $n^{-1/2}\sum_{t=1}^n \int_\mathcal F f(z_t)\,q_t(df)$ grows linearly in $\sqrt n$. The class $\mathcal F$ is called \emph{learnable} if:
$$ \limsup_{n\to\infty}\frac{\mathcal V_n(\mathcal F)}{\sqrt n}=0.$$
The following nice estimate was obtained in \cite[Theorem 2]{RakSriTew10}:
$\mathcal V_n(\mathcal F)\le 2\mathfrak R_n(\mathcal F).$
In the model of the supervised learning a similar lower bound also holds true: see \cite[Proposition 9]{RakSriTew15}.

In the sequel we assume that the class $\mathcal F$ is \emph{finite}: $\mathcal F=\{f_1,\dots,f_m\}$, and its elements are uniformly bounded: $|f_i|\le b$. Any such class is learnable: $\mathfrak R_n(\mathcal F)\le b\sqrt{2\ln m}$ (see \cite[Lemma 5]{RakSriTew10}, \cite[Lemma 1]{RakSriTew15b}). The goal of the present note is to characterize the quantity
$$\mathfrak R^a(\mathcal F)=\lim_{n\to\infty}\mathfrak R_n(\mathcal F),$$
which we call the \emph{asymptotic sequential Rademacher complexity} of $\mathcal F$.

The mentioned estimate of $\mathfrak R_n(\mathcal F)$ implies the inequality
\begin{equation} \label{1.2}
\mathfrak R^a(\mathcal F)\le b\sqrt{2\ln m}.
\end{equation}
Note, that \eqref{1.2} does not take into account the structure of the set $\mathcal F$. To get an insight into what features of $\mathcal F$ are essential, let us consider the \emph{Rademacher complexity}: a well established notion of a statistical learning theory, where the data $z_t$ are assumed to be independent and identically distributed. Let $(Z_t)_{t=1}^n$ be a sequence of i.i.d. random variables with values in $\mathcal Z$. It is assumed also that $(Z_t)_{t=1}^n$ are independent from $(\varepsilon_t)_{t=1}^n$. The Rademacher complexity of a function class $\mathcal F$ is defined by (see, e.g., \cite{SamWeb11,Vyu15})
\begin{equation} \label{1.3}
\mathfrak R_n^{iid}(\mathcal F)=\frac{1}{\sqrt n}\mathsf E\sup_{f\in\mathcal F}\sum_{t=1}^n\varepsilon_t f(Z_t).
\end{equation}
The role of is this quantity in the statistical learning theory is similar to the role of \eqref{1.1} in the online learning theory.

For the case of a finite class $\mathcal F=\{f_1,\dots,f_m\}$ one may rewrite \eqref{1.3} as
$$ \mathfrak R_n^{iid}(\mathcal F)=\mathsf Eg\left(\sum_{t=1}^n\frac{\varepsilon_t F(Z_t)}{\sqrt n}\right),\quad g(x)=\max\{x_1,\dots,x_m\},$$
where $F(z)=(f_1(z),\dots,f_m(z))$. Although $g$ is not bounded, the validity of the central limit theorem can be established with the use of the Hoeffding inequality as in \cite[Lemma A.11]{CesLug06} (see also the proof of Theorem \ref{th:2} below). Let $\Sigma$ be the covariance matrix of $\varepsilon F(Z)$, where $(\varepsilon,Z)$ is distributed as $(\varepsilon_t,Z_t)$. Then
\begin{equation} \label{1.4}
\mathfrak R^{a,iid}(\mathcal F):=\lim_{n\to\infty} \mathfrak R_n^{iid}(\mathcal F)=\mathsf E\max\{Y_1,\dots,Y_m\},\quad Y\sim N(0,\Sigma).
\end{equation}
Thus, the \emph{asymptotic Rademacher complexity} \eqref{1.4} coincides with the expected value of \emph{largest order statistics} of an $m$-dimensional normal random variable $Y$ with zero mean and the covariance matrix $$\Sigma_{kl}=(\mathsf E[f_k(Z)f_l(Z)])_{k,l=1}^m.$$

We will see that $\mathfrak R^a(\mathcal F)$ admits a representation similar to \eqref{1.4} in the framework of Peng's sublinear expectation theory \cite{Peng10}. The characterization of $\mathfrak R^a(\mathcal F)$ in terms of the viscosity solution of a $G$-heat equation is given in Theorem \ref{th:2}. This result is translated to the language of the sublinear expectation theory in Remark \ref{rem:2}. In Section \ref{sec:3} we obtain the upper bound \eqref{1.2}, as well as a lower bound for $\mathfrak R^a(\mathcal F)$, combining viscosity solutions techniques with known estimates of the expected maximum of a Gaussian process.

\section{The main result} \label{sec:2}
Our argumentation is based on a central limit theorem under model uncertainty (see \cite{Rok15}) which we now recall. Let $(\xi_i)_{i=1}^\infty$ be a sequence of $d$-dimensional random variables with zero mean and identity covariance matrix:
$$ \mathsf E\xi_i=0,\quad \mathsf E(\xi_i^k\xi_i^l)_{k,l=1}^d=I.$$
Let $\mathfrak A_1^n$ be the set of sequences $A_1^n=(A_i)_{i=1}^n$, where $A_i$ is a $\sigma(\xi_1,\dots,\xi_{i-1})$-measurable random element with values in a compact set $\Lambda$ of $d\times d$ matrices ($A_1$ is simply an element of $\Lambda$). For a bounded continuous function $f:\mathbb R^d\mapsto\mathbb R$ put
$$ \mathscr L=\lim_{n\to\infty}\sup_{A_1^n\in\mathfrak A_1^n}\mathsf Ef\left(\sum_{t=1}^n\frac{A_t\xi_t}{\sqrt n}\right).$$

Furthermore, let $G(S)=\frac{1}{2}\sup_{A\in\Lambda}\Tr(A A^T S)$, where $S$ belongs to the set $\mathbb S^m$ of symmetric $d\times d$ matrices. Consider the \emph{$G$-heat equation}
\begin{equation} \label{2.1}
-v_t(t,x)-G(v_{xx}(t,x))=0,\quad (t,x)\in Q^\circ=[0,1)\times\mathbb R^d,
\end{equation}
with the terminal condition
\begin{equation} \label{2.2}
v(1,x)=f(x).
\end{equation}
By $v_{xx}=(v_{x_i x_j})_{i,j=1}^n$ we denote the Hessian matrix.

Recall that an upper semicontinuous (usc) (resp., a lower semicontinuous (lsc)) function $u:Q\mapsto\mathbb R$, $Q=[0,1]\times\mathbb R^d$ is called a \emph{viscosity subsolution} (resp., \emph{supersolution}) of the problem \eqref{2.1}, \eqref{2.2} if
$$u(1,x)\le f(x),\quad (\text{resp.},\ u(1,x)\ge f(x)),$$
and for any $(\overline t,\overline x)\in Q^\circ=[0,1)\times\mathbb R^d$ and any test function $\varphi\in C^2(\mathbb R^{m+1})$ such that $(\overline t,\overline x)$ is a local maximum (resp., minimum) point of $u-\varphi$ on $Q^\circ$, the inequality
$$ (-\varphi_t-G(\varphi_{xx}))(\overline t,\overline x)\le 0\quad (\text{resp.},\ \ge 0)$$
holds true. A \emph{continuous} function $u:Q\mapsto\mathbb R$ is called a \emph{viscosity solution} of \eqref{2.1}, \eqref{2.2} if it is both viscosity sub- and supersolution. The classical reference is \cite{CraIshLio92}.

\begin{thm} \label{th:1}
Let $v:[0,1]\times\mathbb R^d\mapsto\mathbb R$ be the unique bounded viscosity solution of \eqref{2.1}, \eqref{2.2}. Then $\mathscr L=v(0,0)$.
\end{thm}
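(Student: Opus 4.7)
The plan is to prove Theorem \ref{th:1} by the monotone-scheme approach of Barles and Souganidis: encode $\mathsf E f(\sum_{t=1}^n A_t\xi_t/\sqrt n)$ as a dynamic programming value function, pass to half-relaxed limits, and identify the limit with the unique viscosity solution of \eqref{2.1}, \eqref{2.2} via a comparison principle. I would define
$$ v_n(k,x)=\sup_{A_{k+1}^n}\mathsf E f\!\left(x+\sum_{t=k+1}^n\frac{A_t\xi_t}{\sqrt n}\right),\qquad v_n(n,x)=f(x), $$
so that $\mathscr L=\lim_{n\to\infty}v_n(0,0)$. The independence of $\xi_{k+1}$ from $\sigma(\xi_1,\dots,\xi_k)$ together with a standard measurable-selection argument yield the Bellman recursion
$$ v_n(k,x)=\sup_{A\in\Lambda}\mathsf E\,v_n\!\left(k+1,\,x+\frac{A\xi_{k+1}}{\sqrt n}\right), $$
and one immediately has $\|v_n\|_\infty\le\|f\|_\infty$. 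Rescaling time by $V_n(t,x)=v_n(\lceil nt\rceil,x)$, I would form the upper and lower half-relaxed limits $\overline V$ and $\underline V$, which are finite and satisfy $\underline V\le\overline V$. The goal reduces to showing that $\overline V$ is a viscosity subsolution and $\underline V$ a viscosity supersolution of \eqref{2.1}, \eqref{2.2}; comparison then forces $\overline V=\underline V=v$, and in particular $v_n(0,0)\to v(0,0)$.

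The consistency verification is the analytic core. For $\varphi\in C^2$, a second-order Taylor expansion in the spatial variable gives
$$ \mathsf E\varphi\!\left(x+\frac{A\xi}{\sqrt n}\right)-\varphi(x)=\frac{1}{\sqrt n}\nabla\varphi(x)\cdot\mathsf E(A\xi)+\frac{1}{2n}\mathsf E[\xi^T A^T\varphi_{xx}(x)A\xi]+R_n(A,x). $$
The assumptions $\mathsf E\xi=0$ and $\mathsf E(\xi\xi^T)=I$, together with the cyclic invariance of the trace, collapse the explicit terms to $\frac{1}{2n}\Tr(AA^T\varphi_{xx}(x))$, and taking the supremum over $A\in\Lambda$ produces exactly $\frac{1}{n}G(\varphi_{xx}(x))$. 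Matching this with the time step $1/n$ recovers the $G$-heat operator $-\varphi_t-G(\varphi_{xx})$. The standard Barles--Perthame manoeuvre --- evaluate the Bellman recursion at a local maximum (resp.\ minimum) of $V_n-\varphi$, subtract $\varphi$, divide by $1/n$, and pass to the limit --- then transfers the subsolution (resp.\ supersolution) inequality to $\overline V$ (resp.\ $\underline V$).

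The principal obstacle is controlling the remainder $R_n(A,x)$ uniformly in $A\in\Lambda$ and locally uniformly in $x$. Under a finite third-moment hypothesis on $\xi$, $R_n$ is of order $n^{-3/2}\sup_{A\in\Lambda}\mathsf E|A\xi|^3$, which vanishes after division by the time step $1/n$; compactness of $\Lambda$ and uniform continuity of $\varphi_{xx}$ on compacts make the bound uniform. In the absence of a third moment one would fall back on a Lindeberg-type truncation, and keeping the truncation threshold uniform in the admissible (random) controls $A_k$ is the real technical hurdle. The remaining ingredient is a comparison principle for bounded viscosity solutions of the $G$-heat equation on $[0,1]\times\mathbb R^d$; since $G$ is continuous, sublinear, and degenerate-elliptic on $\mathbb S^d$, the classical Crandall--Ishii--Lions machinery applies and delivers uniqueness, closing the argument.
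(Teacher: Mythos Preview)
The paper does not actually prove Theorem~\ref{th:1}; it simply cites \cite{Rok15} for the argument. Your Barles--Souganidis scheme---dynamic programming recursion for $v_n(k,x)$, half-relaxed limits of the time-rescaled functions, consistency via second-order Taylor expansion exploiting $\mathsf E\xi=0$ and $\mathsf E(\xi\xi^T)=I$, and comparison for the $G$-heat equation---is precisely the standard route to such nonlinear central limit theorems and is, in outline, what \cite{Rok15} carries out. You have also correctly isolated the only genuine analytic difficulty: under the stated second-moment hypothesis alone, the Taylor remainder $R_n$ is not $o(1/n)$ for free, and one must insert a Lindeberg-type truncation of $\xi$ that is uniform over $A\in\Lambda$; compactness of $\Lambda$ is what makes this uniform control feasible. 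One minor point you glossed over is the verification of the terminal condition for the relaxed limits, i.e.\ $\overline V(1,x)\le f(x)\le\underline V(1,x)$, but this follows routinely from the boundedness and continuity of $f$ together with $A\xi/\sqrt n\to 0$ in probability.
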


We refer to \cite{Rok15} for the proof of this result and the discussion of its relation to Peng's central limit theorem: \cite{Peng08}.

For $\mathcal F=\{f_1,\dots,f_m\}$ let us rewrite the expression \eqref{1.1} as follows:
\begin{equation} \label{2.3}
\mathfrak R_n(\mathcal F)=\sup_{z_1^n}\mathsf E g\left(\sum_{t=1}^n\frac{\varepsilon_t F(z_t(\varepsilon_1^{t-1}))}{\sqrt n}\right),\quad g(x)=\max\{x_1,\dots,x_m\},
\end{equation}
where $F=(f_1,\dots,f_m)$. Denote by $\Gamma$ the closure of the set $\{F(z):z\in\mathcal Z\}\subset\mathbb R^m.$
The expression \eqref{2.3} can be represented in the form
\begin{equation} \label{2.4}
\mathfrak R_n(\mathcal F)=\sup_{\gamma_1^n}\mathsf E g\left(\sum_{t=1}^n\frac{\varepsilon_t\gamma_t}{\sqrt n}\right),
\end{equation}
where supremum is taken over all sequences $\gamma_1^n$, whose elements $\gamma_t$ are measurable with respect to $\sigma(\varepsilon_1,\dots,\varepsilon_{t-1})$, and take values in $\Gamma$.

\begin{thm} \label{th:2}
Let $v:[0,1]\times\mathbb R^d\mapsto\mathbb R$ be the unique viscosity solution of the problem
\begin{equation} \label{2.5}
-v_t(t,x)-\frac{1}{2}\sup_{\gamma\in\Gamma}\sum_{i,j=1}^m \gamma^i\gamma^j v_{x_i x_j}=0,
\end{equation}
\begin{equation} \label{2.6}
v(1,x)=g(x)=\max\{x_1,\dots,x_m\}, \quad x\in\mathbb R^m,
\end{equation}
satisfying the linear growth condition: $|v(t,x)|\le C(1+|x|)$, where $|x|$ is the usual Euclidian norm of $x$. Then $\mathfrak R^a(\mathcal F)=v(0,0)$.
\end{thm}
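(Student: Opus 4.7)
The plan is to recast Theorem \ref{th:2} as an instance of Theorem \ref{th:1}, then pass from bounded to unbounded terminal data. Because $|f_i|\le b$, the set $\Gamma\subset\mathbb R^m$ is compact. In Theorem \ref{th:1} take $d=m$, let $\xi_t=(\varepsilon_t^{(1)},\dots,\varepsilon_t^{(m)})$ be i.i.d.\ vectors of independent Rademacher coordinates (so $\mathsf E\xi_t=0$, $\mathsf E\xi_t\xi_t^T=I$), and put $\Lambda=\{\gamma e_1^T:\gamma\in\Gamma\}$, with $e_1$ the first standard basis vector; $\Lambda$ is compact in the space of $m\times m$ matrices. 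Since $AA^T=\gamma\gamma^T$ for $A=\gamma e_1^T$, the operator of Theorem \ref{th:1} reads
$$G(S)=\frac{1}{2}\sup_{A\in\Lambda}\Tr(AA^TS)=\frac{1}{2}\sup_{\gamma\in\Gamma}\sum_{i,j=1}^m\gamma^i\gamma^j S_{ij},$$
matching the operator in \eqref{2.5}.

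Next I would show
$$\mathfrak R_n(\mathcal F)=\sup_{A_1^n\in\mathfrak A_1^n}\mathsf Eg\left(\sum_{t=1}^n\frac{A_t\xi_t}{\sqrt n}\right).$$
For ``$\le$'', any $\Gamma$-valued $\gamma_t$ adapted to $\sigma(\varepsilon_1^{t-1})\subset\sigma(\xi_1^{t-1})$ produces $A_t:=\gamma_t e_1^T\in\Lambda$ with $A_t\xi_t=\varepsilon_t\gamma_t$, so \eqref{2.4} gives the bound. For ``$\ge$'', given $A_t=\gamma_t(\xi_1^{t-1})e_1^T$, condition on the ``redundant'' noise $\eta=(\xi_t^{(2)},\dots,\xi_t^{(m)})_{t=1}^n$: conditionally on $\eta$, the variables $\xi_t^{(1)}$ remain i.i.d.\ Rademacher, and $\tilde\gamma_t(\varepsilon_1^{t-1}):=\gamma_t(\varepsilon_1^{t-1},\eta_1^{t-1})$ is a $\Gamma$-valued strategy adapted to the Rademacher filtration, so by \eqref{2.4} the conditional expectation is at most $\mathfrak R_n(\mathcal F)$; integrating over $\eta$ finishes the bound.

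Since $g(x)=\max\{x_1,\dots,x_m\}$ is unbounded, I would truncate: $g_R=(-R)\vee g\wedge R$. For any admissible strategy, each coordinate of $S_n:=\sum_t\varepsilon_t\gamma_t/\sqrt n$ is a martingale transform with increments in $[-b,b]$, so Azuma--Hoeffding gives $\mathsf P(|S_n^{(i)}|>R)\le 2e^{-R^2/(2b^2)}$ uniformly in $n$ and in the adversary's choice. Consequently $\sup_{A_1^n}\mathsf Eg(\cdots)$ differs from $\sup_{A_1^n}\mathsf Eg_R(\cdots)$ by a quantity that vanishes as $R\to\infty$, uniformly in $n$. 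Applying Theorem \ref{th:1} to the bounded continuous $g_R$ yields
$$\lim_{n\to\infty}\sup_{A_1^n}\mathsf Eg_R\left(\sum_{t=1}^n\frac{A_t\xi_t}{\sqrt n}\right)=v_R(0,0),$$
where $v_R$ is the bounded viscosity solution of \eqref{2.5} with terminal $g_R$.

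It remains to let $R\to\infty$ on the PDE side. Since $g_R\to g$ locally uniformly with $|g_R-g|\le|x|\mathbf 1_{\{|x|>R\}}$, standard stability of viscosity solutions combined with the comparison principle for the $G$-heat equation in the class of functions of linear growth imply $v_R(0,0)\to v(0,0)$. Chaining the three passages yields $\mathfrak R^a(\mathcal F)=v(0,0)$. The main obstacles are the two technical transitions from bounded to unbounded data: the uniform tail control of $g-g_R$ against adversarial strategies, and the stability of the $G$-heat viscosity solution under truncation of a linear-growth terminal condition.
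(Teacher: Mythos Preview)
Your proposal is correct and follows essentially the same route as the paper: truncate $g$, apply Theorem~\ref{th:1} to the bounded terminal data, control the probabilistic truncation error uniformly via Azuma--Hoeffding, and pass to the limit on the PDE side by stability/comparison for viscosity solutions of linear growth. The only cosmetic difference is in how you invoke Theorem~\ref{th:1}: the paper simply remarks that the matrices $A_t$ there need not be square and applies it directly with the scalar noise $\varepsilon_t$ and $\Gamma$-valued controls, whereas you keep the square-matrix framework literally by padding with dummy Rademacher coordinates $\xi_t^{(2)},\dots,\xi_t^{(m)}$ and then conditioning them away---a perfectly valid (and arguably more self-contained) reduction.
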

\begin{proof} In Theorem \ref{th:1} it is not essential that matrices $A_t$ are quadratic.
So, to apply Theorem \ref{th:1} to the expression \eqref{2.4}, the only issue we need to overcome is the unboundedness of $g$.

The existence and uniqueness of a viscosity solution $v$ of \eqref{2.5}, \eqref{2.6}, satisfying the linear growth condition, is well known from the theory of stochastic optimal control: see Theorem 5.2 and Theorem 6.1 of \cite[Chapter 4]{YonZho99}.
Put $a\vee b=\max\{a,b\}$, $a\wedge b=\min\{a,b\}$, and denote by $v_L$ the unique bounded viscosity solution of \eqref{2.5} satisfying the terminal condition
$$v_L(x)=g_L(x):=g(x)\vee L\wedge(-L),\quad x\in\mathbb R^d$$
instead of \eqref{2.6}. We can apply Theorem \ref{th:1} to \eqref{2.4} with $g_L$ instead of $g$ and $\gamma_t\in\Gamma$ instead of quadratic matrices $A_t\in\Lambda$. As far as the equation \eqref{2.1} corresponds to \eqref{2.5}, we get
$$v_L(0,0)=\lim_{n\to\infty}\sup_{\gamma_1^n}\mathsf E g_L\left(\sum_{t=1}^n\frac{\varepsilon_t\gamma_t}{\sqrt n}\right).$$
So, it is sufficient to prove the relations
\begin{equation} \label{2.7}
\mathfrak R^a(\mathcal F)=\lim_{L\to\infty}\lim_{n\to\infty}\sup_{\gamma_1^n}\mathsf E g_L\left(\sum_{t=1}^n\frac{\varepsilon_t\gamma_t}{\sqrt n}\right),\quad v(0,0)=\lim_{L\to\infty} v_L(0,0).
\end{equation}

The proof of the first equality \eqref{2.7} is similar to that of \cite[Lemma A.11]{CesLug06}. Put
$X_n=n^{-1/2}\sum_{t=1}^n \varepsilon_t\gamma_t.$
From the identity
$$ g(x)=g_L(x)+(g(x)-L)I_{\{g(x)>L\}}+(g(x)+L)I_{\{g(x)<-L\}},$$
we get the inequalities
$$ \mathsf E g(X_n)\le \mathsf E g_L(X_n)+\mathsf E[(g(X_n)-L)I_{\{g(X_n)>L\}}], $$
$$ \mathsf E g(X_n)\ge \mathsf E g_L(X_n)+\mathsf E[(g(X_n)+L)I_{\{g(X_n)<-L\}}]. $$
Using the estimate $g(x)\le |x|$,  we obtain
\begin{align*}
 &\mathsf E[(g(X_n)-L)I_{\{g(X_n)>L\}}]\le \mathsf E[(|X_n|-L)I_{\{|X_n|>L\}}]=\mathsf E[(|X_n|-L)^+]\\
 &=\int_0^\infty\mathsf P((|X_n|-L)^+\ge u)du=\int_0^\infty\mathsf P(|X_n|\ge L+u)\,du\\
 &=\int_L^\infty\mathsf P(|X_n|\ge u)\,du\le\sum_{k=1}^m\int_L^\infty\mathsf P(|X_n^k|\ge u)du,\qquad a^+=\max\{a,0\}.
\end{align*}

Since $(\varepsilon_t\gamma_t^k)_{t=1}^n$ is a martingale difference and $|\varepsilon_t\gamma_t^k|\le b$, by the Azuma inequality (see, e.g., \cite[Theorem 1.3.1]{Ste97}):
$$ \mathsf P(\left(\left|\sum_{t=1}^n \varepsilon_t\gamma_t^k\right|\ge\lambda\right)\le 2\exp\left(-\frac{\lambda^2}{2 b^2n}\right)$$
 we get
$$ \mathsf P\left(\sqrt n |X^k_n|\ge \sqrt n u\right)\le 2\exp\left(-\frac{u^2}{2 b^2}\right).$$
It follows that
$$\mathsf E[(g(X_n)-L)I_{\{g(X_n)>L\}}]\le r(L):=2m\int_L^\infty \exp\left(-\frac{u^2}{2 b^2}\right)\,du.$$

Similarly,
$$\mathsf E[(g(X_n)+L)I_{\{g(X_n)<-L\}}]\ge -r(L).$$
Thus,
$$\mathsf E g_L(X_n)-r(L)\le \mathsf E g(X_n)\le \mathsf E g_L(X_n)+r(L),$$
and we get the inequalities
$$\sup_{\gamma_1^n} \mathsf E g_L(X_n)-r(L) \le \mathfrak R_n(\mathcal F)\le \sup_{\gamma_1^n}\mathsf E g_L(X_n)+r(L),$$
which imply the first equality \eqref{2.7}, since $r(L)\to 0$, $L\to\infty$,

Firthermore, put $G(X)=\frac{1}{2}\sup_{\gamma\in\Gamma}\sum_{i,j=1}^m X_{ij}\gamma^i\gamma^j$,
\begin{equation} \label{2.8}
 F(t,x,r,q,X)=\begin{cases}
-q-G(X),& t\in [0,1),\\
r-g(x),& t=1,
\end{cases}
\end{equation}
and denote by
$$F_*(t,x,r,q,X)=\begin{cases}
-q-G(X),& t\in [0,1),\\
\min\{-q-G(X),r-g(x)\},& t=1
\end{cases}$$
the lsc envelope of $F:[0,1]\times\mathbb R^m\times\mathbb R\times\mathbb R\times\mathbb S^m\mapsto\mathbb R$. A usc function $u$ is a viscosity solution of \eqref{2.5}, \eqref{2.6} if and only if
\begin{equation} \label{2.9}
F_*(\overline t,\overline x,u(\overline t,\overline x),\varphi_t(\overline t,\overline x),\varphi_{xx}(\overline t,\overline x))\le 0
\end{equation}
for any $(\overline t,\overline x)\in Q$ and any test function $\varphi\in C^2(\mathbb R^{m+1})$ such that
$(\overline t, \overline x)$ is a local maximum point of $u-\varphi$ on $Q$. To prove this we only need to show that if $u$ is a viscosity subsolution in the sense of the definition \eqref{2.9}, then the inequality
$$(-\varphi_t-G(\varphi_{xx}))(1,\overline x)\le 0$$
is impossible. Note, that $\widehat \varphi=\varphi+c(1-t)$ is still a test function for $u$ at $(1,\overline x)$ for any $c>0$. Thus,
$$ c-\varphi_t(1,\overline x)-G(\varphi_{xx}(1,\overline x))\le 0,$$
and we get a contradiction since $c$ is arbitrary.

An advantage of the definition \eqref{2.9} is that it treats the equation and boundary condition simultaneously. As we have just seen, the correspondent boundary condition in the \emph{viscosity sense}, given by \eqref{2.9} for $t=1$ (cf. \cite[\S 7]{CraIshLio92}), is equivalent to the usual boundary condition in our case.

Viscosity supersolutions are considered in the same way. A viscosity solution $v$ of \eqref{2.5}, \eqref{2.6} may be termed as a viscosity solution of the equation
\begin{equation} \label{2.10}
 F(t,x,v(t,x),v_t(t,x),v_{xx}(t,x))=0,\quad (t,x)\in Q.
\end{equation}

Denote $v_+$ (resp., $v_-$) the viscosity solution of \eqref{2.5}, satisfying the terminal condition $v_+(1,x)=g^+(x)$ (resp., $v_-(1,x)=g^-(x)$), $x\in\mathbb R^m$, and the linear growth condition. By the comparison result of \cite[Theorem 2.1]{DaLLey06} or \cite[Theorem 5]{Str08} it follows that
$$ v_-\le v_L\le v_+\quad \textrm{on } [0,1]\times\mathbb R^d.$$
Hence, the upper and lower ``relaxed limits'' (see \cite[\S 6]{CraIshLio92}, \cite[Chapter 2]{Gig06})
$$\overline v(t,x) =\lim_{j\to\infty}\sup\{v_L(s,y):L\ge j,\ (s,y)\in Q,\ |s-t|+|y-x|\le 1/j\},$$
$$\underline v(t,x)=\lim_{j\to\infty}\inf\{v_L(s,y):L\ge j,\ (s,y)\in Q,\ |s-t|+|y-x|\le 1/j\} $$
are finite and satisfy the linear growth condition. Moreover, $\overline v$ is usc, $\underline v$ is lsc.

Denote by $F_L$ the function of the form \eqref{2.8}, where $g$ is changed to $g_L$. The lower relaxed limit of the lsc envelope $(F_L)_*$ of $F_L$ is $F_*$.
By \cite[Theorem 2.3.5]{Gig06} it follows that the function $\overline v$ is a usc subsolution of \eqref{2.10}. Similary, $\underline v$ is an lsc supersolution of \eqref{2.10}
By the mentioned comparison results of \cite{DaLLey06} or \cite{Str08} we have $\overline v\le \underline v$. The opposite inequality is clear from the definition of $\overline v$, $\underline v$. It follows that the function $v=\overline v=\underline v$ coincides with the unique viscosity solution of \eqref{2.5}, \eqref{2.6}, and the second equality \eqref{2.7} holds true:
\[ \lim_{L\to\infty} v_L(0,0)=v(0,0).\qedhere \]
\end{proof}

\begin{rem} \label{rem:1}
As already mentioned, there is link between the problem (\ref{2.5}), (\ref{2.6}) and the stochastic control theory. Let $(W_t)_{t\ge 0}$ be a Brownian motion. Denote by  $\mathfrak G$ the set of stochastic processes $\gamma$ adapted to the natural filtration of $(W_t)_{t\ge 0}$ and taking values in $\Gamma$.
Consider the family of stochastic processes
$$ X_s^{t,x,\gamma,i}=x+\int_t^s\gamma^i_u\,dW_u,\quad s\in[t,1],\quad i=1,\dots,m$$
and the related value function
$$ v(t,x)=\sup\left\{\mathsf E\max\{X_1^{t,x,\gamma,1},\dots,X_1^{t,x,\gamma,m}\}:\gamma\in\mathfrak G\right\}.$$
By Proposition 3.1 and Theorem 5.2 of \cite[Chapter 4]{YonZho99}, $v$ is a viscosity solution of \eqref{2.5}, \eqref{2.6}, satisfying the linear growth condition. In particular,
$$ \mathfrak R^a(\mathcal F)=v(0,0)=\sup\left\{\mathsf E\max\left\{\int_0^1\gamma^1_u\,dW_u,\dots,\int_0^1\gamma^m_u\,dW_u\right\}:\gamma\in\mathfrak G\right\}.$$
\end{rem}

\begin{rem} \label{rem:2}
Denote by $\conv A$ the convex hull of a set $A$. Let us rewrite the equation \eqref{2.5} in the form
$$ -v_t(t,x)-\frac{1}{2}\sup_{Q\in\Theta}\Tr(Qv_{xx}(t,x))=0,$$
where $\Theta=\conv\{(\gamma^i\gamma^j)_{i,j=1}^m:\gamma\in\Gamma\}\subset\mathbb S^n.$ In the framework of the sublinear expectation theory we have (see \cite[Chapter II]{Peng10})
\begin{equation} \label{2.11}
 \mathfrak R^a(\mathcal F)=v(0,0)=\widehat{\mathsf E}\max\{X_1,\dots,X_m\},
\end{equation}
where $X$ is a multidimensional $G$-normal random variable: $X\sim\mathcal N(0,\Theta)$, and by $\widehat{\mathsf E}$ we denote a sublinear expectation. Thus, $\mathfrak R^a(\mathcal F)$ can be regarded as the \emph{sublinear expected value of the largest order statistics} of a  multidimensional \emph{$G$-normal} random variable. Note, that the set $\Theta$, characterizing the uncertainty structure of $Y$, coincides with the convex hull of covariance matrices of random vectors $\varepsilon\gamma$, $\gamma\in\Gamma=\{F(z):z\in\mathcal Z\}$, where $\varepsilon$ is a Rademacher random variable. We emphasize the similarity of this description with case of the Rademacher complexity $\mathfrak R^{a,iid}(\mathcal F)$, considered in Section \ref{sec:1}.
\end{rem}

\section{Upper and lower bounds} \label{sec:3}
To illustrate our approach, we derive upper and lower bounds for $\mathfrak R^a(\mathcal F)$, combining simple comparison results for viscosity solutions of parabolic equations and known estimates of the expected maximum of a Gaussian process.
\begin{thm} \label{th:3}
Let $\mathcal F=\{f_1,\dots,f_m\}$, where $f_i$ are uniformly bounded $|f_i|\le b$. Then
\begin{equation} \label{3.1}
\frac{1}{17} a(\mathcal F)\le\frac{\mathfrak R^a(\mathcal F)}{\sqrt{\ln m}}\le \sqrt{2} b,
\end{equation}
$$a(\mathcal F)=\sup_{\nu\in\mathcal P(\mathscr G)}\inf_{i\neq j}\left(\int_{\mathcal Z}(f_i(z)-f_j(z))^2\,\nu(dz)\right)^{1/2},$$
where $\mathcal P(\mathscr G)$ is the set of probability measures on the $\sigma$-algebra of $\mathscr G$.
\end{thm}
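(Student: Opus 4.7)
The upper bound in (3.1) is immediate: dividing the estimate \eqref{1.2} by $\sqrt{\ln m}$ gives $\mathfrak R^a(\mathcal F)/\sqrt{\ln m}\le b\sqrt{2}$. For the lower bound, the plan is to exploit the PDE characterization in Theorem \ref{th:2} by constructing, for each $\nu\in\mathcal P(\mathscr G)$, an explicit \emph{Gaussian} viscosity subsolution of \eqref{2.5}, \eqref{2.6}, comparing it with $v$, and then invoking an explicit form of Sudakov's minoration.

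Given $\nu\in\mathcal P(\mathscr G)$, I would form the positive semidefinite matrix $Q_\nu=\int_\mathcal Z F(z)F(z)^T\,\nu(dz)$. Because $\Gamma=\overline{\{F(z):z\in\mathcal Z\}}$ is compact (the $f_i$ being uniformly bounded) and $F$ takes values in $\Gamma$, a Carath\'eodory-type argument in the finite-dimensional space of symmetric matrices shows that $Q_\nu\in\Theta=\conv\{\gamma\gamma^T:\gamma\in\Gamma\}$. With $Y\sim N(0,Q_\nu)$, define
$$w(t,x)=\mathsf E g\bigl(x+\sqrt{1-t}\,Y\bigr),\qquad (t,x)\in[0,1]\times\mathbb R^m.$$
This $w$ is continuous on $[0,1]\times\mathbb R^m$, smooth on $[0,1)\times\mathbb R^m$, grows at most linearly in $x$, and solves the linear heat equation $-w_t-\tfrac12\Tr(Q_\nu w_{xx})=0$ with $w(1,\cdot)=g$. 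The identity $\tfrac12\sup_{\gamma\in\Gamma}\sum_{i,j}\gamma^i\gamma^j X_{ij}=\tfrac12\sup_{Q\in\Theta}\Tr(QX)$, combined with $Q_\nu\in\Theta$, yields $G(w_{xx})\ge\tfrac12\Tr(Q_\nu w_{xx})=-w_t$, so $w$ is a classical, hence viscosity, subsolution of \eqref{2.5}, \eqref{2.6}. The comparison result of \cite{DaLLey06} or \cite{Str08} already used in the proof of Theorem \ref{th:2} then gives $w\le v$ on $[0,1]\times\mathbb R^m$, and in particular
$$\mathfrak R^a(\mathcal F)=v(0,0)\ge w(0,0)=\mathsf E\max_{1\le i\le m} Y_i.$$

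To conclude I would apply an explicit form of Sudakov's minoration for the centred finite Gaussian family $(Y_i)_{i=1}^m$: namely,
$$\mathsf E\max_{1\le i\le m}Y_i\ge\frac{1}{17}\sqrt{\ln m}\,\min_{i\neq j}\|Y_i-Y_j\|_2.$$
Since $\|Y_i-Y_j\|_2^2=Q_{\nu,ii}-2Q_{\nu,ij}+Q_{\nu,jj}=\int_\mathcal Z(f_i-f_j)^2\,d\nu$ by construction, taking the supremum over $\nu\in\mathcal P(\mathscr G)$ produces the lower bound in (3.1). The main obstacle is pinning down the explicit Sudakov constant $1/17$: the qualitative statement is classical, but producing the precise numerical constant requires either citing a sharp version from the Gaussian-processes literature (e.g.\ Talagrand or Ledoux--Talagrand) or a direct chaining argument with careful bookkeeping of the one-dimensional Gaussian tail estimates. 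Every other step of the argument is essentially formal.
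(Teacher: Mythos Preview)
Your argument is correct. For the lower bound you follow essentially the same route as the paper: the paper observes that for any distribution of $Z$ the covariance matrix $\Sigma=\mathsf E[F(Z)F(Z)^T]$ lies in $\Theta$, so the Gaussian value $\mathfrak R^{a,iid}(\mathcal F)=\mathsf E\max_i Y_i$ is dominated by $\mathfrak R^a(\mathcal F)$ by comparison, and then applies Sudakov's inequality. Your explicit construction of the subsolution $w$ is just a more unpacked version of the same comparison. The constant $1/17$ is not an obstacle: the paper simply cites \cite[Lemma 5.5.6]{MarRos06} and \cite[Lemma 2.1.2]{Tal05} for it.

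The upper bound is where you differ. You invoke \eqref{1.2} directly, which is of course valid. The paper, however, re-derives it from the PDE characterization of Theorem~\ref{th:2}: it compares $v$ with the solution $u$ of the standard heat equation $-u_t-\tfrac{b^2}{2}\Tr(u_{xx})=0$, $u(1,\cdot)=g$, using the convexity of $u$ in $x$ together with the bound $\sup_{\gamma\in\Gamma}\langle u_{xx}\gamma,\gamma\rangle\le b^2\Tr(u_{xx})$ (valid since $|\gamma|\le b$ and $u_{xx}\ge 0$) to show $v\le u$, whence $\mathfrak R^a(\mathcal F)\le u(0,0)=b\,\mathsf E\max_i W_1^i\le b\sqrt{2\ln m}$. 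This is less economical than your shortcut, but it is the point of the section: to show that both bounds can be read off from the viscosity-solution description of $\mathfrak R^a(\mathcal F)$ rather than from the pre-limit estimate on $\mathfrak R_n(\mathcal F)$.
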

\begin{proof}
Along with \eqref{2.5} consider the usual heat equation
\begin{equation} \label{3.2}
-u_t(t,x)-\frac{b^2}{2}\Tr(u_{xx})=0, \quad (t,x)\in Q^\circ
\end{equation}
with the terminal condition $u(1,x)=g(x)$. The function $U=e^{1-t} u$ satisfies the equation
\begin{equation} \label{3.3}
-U_t+U-\frac{b^2}{2}\Tr(U_{xx})=0
\end{equation}
and the same terminal condition. Similarly, if $v$ is the viscosity solution of \eqref{2.5}, \eqref{2.6}, then the function $V=e^{1-t} v$ satisfies the equation
\begin{equation} \label{3.4}
-V_t+V-\frac{1}{2}\sup_{\gamma\in\Gamma}\sum_{i,j=1}^m \gamma^i\gamma^j V_{x_i x_j}=0,
\end{equation}
in $Q^\circ$ in the viscosity sense, and $V(1,x)=g(x)$.

Assume that there exists a point $(\overline t,\overline x)\in Q$ such that $(V-U)(\overline t,\overline x)>0.$ In view of the terminal conditions, we have $\overline t<1$. Since $U$, $V$ satisfy the linear growth condition, the function
$$ (V-U)(t,x)-\frac{\varepsilon}{2}|x|^2$$
attains its maximum on $Q$ at some point $(t_\varepsilon,x_\varepsilon)$. For $\varepsilon$ small enough one may assume that $t_\varepsilon<1$ by virtue of the inequality
\begin{equation}  \label{3.5}
\sup_{(t,x)\in Q}\left((V-U)(t,x)-\frac{\varepsilon}{2}|x|^2\right)\ge (V-U)(\overline t,\overline x)-\frac{\varepsilon}{2}|\overline x|^2>0.
\end{equation}

By the definition, $U+\varepsilon |x|^2/2$ is a test function for the viscosity subsolution $V$ of \eqref{3.4} at $(t_\varepsilon,x_\varepsilon)$. Hence,
\begin{equation} \label{3.6}
\left(-U_t+ V-\frac{1}{2}\sup_{\gamma\in\Gamma}\langle (U_{xx}+\varepsilon I)\gamma,\gamma\rangle\right)(t_\varepsilon,x_\varepsilon)\le 0,
\end{equation}
where $\langle\cdot,\cdot\rangle$ is the usual scalar product in $\mathbb R^m$.
From an explicit representation of $u$:
$$ u(t,x)=\frac{1}{(b\sqrt{2\pi(1-t)})^m}\int_{\mathbb R^m} \exp\left(-\frac{|y|^2}{2 b^2(1-t)}\right)g(x+y)\,dy$$
and the convexity of $g$ it follows that $U$ is convex in $x$. Thus, $U_{xx}$ is non-negative definite and
\begin{equation} \label{3.7}
\sup_{\gamma\in\Gamma}\langle U_{xx}(t_\varepsilon,x_\varepsilon)\gamma,\gamma\rangle\le \sup_{|\gamma|\le b}\langle U_{xx}(t_\varepsilon,x_\varepsilon)\gamma,\gamma\rangle\le b^2(\Tr U_{xx})(t_\varepsilon,x_\varepsilon).
\end{equation}
From the inequalities \eqref{3.6}, \eqref{3.7} and the equation \eqref{3.3}, we get
$$ V(t_\varepsilon,x_\varepsilon)\le \left(U_t+\frac{b^2}{2}(\Tr U_{xx})\right)(t_\varepsilon,x_\varepsilon)+ \frac{b^2}{2}\varepsilon=U(t_\varepsilon,x_\varepsilon)+ \frac{b^2}{2}\varepsilon.$$
Combining this with \eqref{3.5}:
$$ 0<(V-U)(\overline t,\overline x)\le \frac{\varepsilon}{2}|\overline x|^2 +(V-U)(t_\varepsilon,x_\varepsilon)\le \frac{\varepsilon}{2}|\overline x|^2 + \frac{b^2}{2}\varepsilon,$$
we get a contradiction by letting $\varepsilon\to 0$.

Thus, $V\le U$. In particular, $\mathfrak R^a(\mathcal F)=v(0,0)\le u(0,0)$. To get the right inequality \eqref{3.1} we use the probabilistic representation of $u$:
$$ \mathfrak R^a(\mathcal F)\le u(0,0)=\mathsf E g(b W_T)=b\mathsf E\max\{W_1^1,\dots,W_1^m\}\le b\sqrt{2\ln m},$$
where $W_1^i$ are independent standard normal random variables. The last inequality is taken from \cite{CesLug06} (Lemma A.13).

To obtain the left inequality \eqref{3.1} compare the representations \eqref{1.4} and \eqref{2.11}. Since $\Sigma\subset\Theta$, we conclude that $\mathfrak R^{a,iid}(\mathcal F)\le\mathfrak R^a(\mathcal F)$. As in the first part of the proof, this is a consequence of a comparison result: see \cite{Peng08b}. Applying to \eqref{1.4} the Sudakov inequality (see \cite[Lemma 5.5.6]{MarRos06}, \cite[Lemma 2.1.2]{Tal05}), and taking into account that $Z$ is arbitrary, we get
$$\mathfrak R^a(\mathcal F)\ge \frac{1}{17} a(\mathcal F) \sqrt{\ln m},$$
\begin{align*}
 a(\mathcal F)&=\sup_Z\inf_{i\neq j}\left(\mathsf E(Y_i-Y_j)^2\right)^{1/2}=\sup_Z\inf_{i\neq j}\left(\mathsf E(f_i(Z)-f_j(Z))^2\right)^{1/2}=
 \\&=\sup_{\nu\in\mathcal P(\mathscr G)}\inf_{i\neq j}\left(\int_{\mathcal Z}(f_i(z)-f_j(z))^2\,\nu(dz)\right)^{1/2}.\qedhere
\end{align*}
\end{proof}
Assuming that $a(\mathcal F)\ge c>0$ uniformly in $m$, from \eqref{3.1} we see that $\mathcal R(\mathcal F)\sim\sqrt{\ln m}$ for large cardinality of $\mathcal F$. The factor $1/17$ in the lower bound \eqref{3.1} can be refined: see \cite[Section 2.3]{Fer75}.

It would be interesting to extend the representation \eqref{2.11} to the case of an infinite function class $\mathcal F$.

\subsection*{Acknowledgment}
The research is supported by Southern Federal University, project 213.01-07-2014/07.

\end{document}